%% file: main.tex
\documentclass[smallabstract,smallcaptions]{dccpaper}

\usepackage{amsmath}
\usepackage{amssymb}
\usepackage{graphicx}
\usepackage{booktabs}
\usepackage{url}
\usepackage{tikz}
\usetikzlibrary{positioning,arrows.meta}
\definecolor{ourblue}{HTML}{2A78D6}
\usepackage{amsthm}
\newtheorem{proposition}{Proposition}

\newcommand{\ours}{PLEDGE}
\newcommand{\taupx}{\tau_{\mathrm{px}}}
\newcommand{\taush}{\tau_{\mathrm{sh}}}
\newcommand{\tauC}{\tau_{C}}
\newcommand{\taual}{\tau_{\alpha}}

\makeatletter
\renewcommand\section{\@startsection{section}{1}{\z@}{-8pt \@plus -2pt \@minus -1pt}{3pt}{\normalfont\normalsize\bfseries}}
\renewcommand\subsection{\@startsection{subsection}{2}{\z@}{-7pt \@plus -2pt \@minus -1pt}{2.5pt}{\normalfont\normalsize\itshape}}
\def\Section#1{\section*{#1}}
\def\SubSection#1{\subsection*{#1}}
\makeatother

\begin{document}

\title{\large\textbf{\ours: Progressive Lossy Error-Bounded Compression\\of 3D Gaussian Splatting}}

\author{Mingze Xia\thinspace\textsuperscript{*}, Yuxiao Li\thinspace\textsuperscript{\dag},
Baixi Sun\thinspace\textsuperscript{\ddag}, Wenbo Li\thinspace\textsuperscript{*},
Pu Jiao\thinspace\textsuperscript{\S},\\
Sheng Di\thinspace\textsuperscript{\ddag}, Franck Cappello\thinspace\textsuperscript{\ddag},
Hanqi Guo\thinspace\textsuperscript{\dag}, Xin Liang\thinspace\textsuperscript{*}\\[0.3em]
{\footnotesize\textsuperscript{*}Oregon State University\quad
 \textsuperscript{\dag}The Ohio State University\quad
 \textsuperscript{\ddag}Argonne National Laboratory}\\
{\footnotesize\textsuperscript{\S}The University of Texas Rio Grande Valley}\\[0.15em]
{\footnotesize\texttt{\char123 xiami,liwenb,lianxin\char125 @oregonstate.edu}\quad
 \texttt{\char123 li.14025,guo.2154\char125 @osu.edu}}\\
{\footnotesize\texttt{\char123 bsun,sdi1,cappello\char125 @anl.gov}\quad
 \texttt{pu.jiao@utrgv.edu}}
}

{\setlength{\tabcolsep}{0pt}\maketitle}
\thispagestyle{empty}

\begin{abstract}
Existing 3D Gaussian Splatting (3DGS) codecs report reconstruction
quality after compression, and give users little control over the
distortion of individual primitives. We present PLEDGE, a
training-free, progressive, error-bounded codec that translates four
user-declared tolerances into per-primitive quantization bounds:
screen-space position, footprint deformation due to rotation and
scale, view-dependent color, and opacity. The bounds apply throughout
a declared camera domain, including previously unseen viewpoints.
Position and opacity are bounded directly; footprint deformation is
bounded under the rasterizer's affine projection model, and color is
bounded over all viewing directions. PLEDGE organizes quantized residuals into a
progressive bitstream with explicit error bounds for every valid
prefix. Experiments across thirteen scenes verify all declared
per-mechanism bounds and demonstrate competitive rate--distortion
performance. The four tolerances incur substantially different rate
costs, so users can spend rate where their application needs it.
\end{abstract}

\Section{1. Introduction}

A trained 3D Gaussian Splatting (3DGS) model~\cite{kerbl3dgs} consists of
anisotropic Gaussian primitives, each represented by a position, a covariance
factored into scale and rotation, an opacity, and spherical-harmonic (SH)
color coefficients. A benchmark scene typically occupies several hundred
megabytes, and the codecs built to shrink it~\cite{bagdasarian3dgszip} optimize
and report aggregate reconstruction quality.

Attribute-level control matters when a scene is consumed for more than
viewing: an occlusion mask for AR needs primitives to stay within a pixel of
where they were trained, a measurement or navigation task needs geometry more
than color, and a streaming preview needs the opposite. The rate costs of these attributes are also
asymmetric: at a tight color tolerance, degree 1--3 SH coefficients
account for \(48\%\) of our bitstream, compared with \(14\%\) for position.
Error-bounded scientific compression~\cite{di2016sz} controls numerical
distortion, but applying it to 3DGS requires
connecting parameter errors to image-space effects. A one-pixel footprint
displacement is directly interpretable, whereas a quaternion error of
\(10^{-3}\) is not. Image-space distortion depends on camera
parameters and must account for viewpoints beyond those available during
encoding.

Clients should also be able to reconstruct a scene at different download
budgets with predictable error bounds. Existing progressive codecs~\cite{chen2026pcgs} offer no such guarantee for
intermediate reconstructions.

\textbf{Contributions.} \ours\ (Progressive Lossy Error-bounded
Declared-tolerance Gaussian-Splatting Encoder) needs only a trained 3DGS model
and its camera calibration. It inverts the four tolerances into per-primitive
quantization bounds over a declared camera domain
(Figure~\ref{fig:framework}), then codes the residuals with a hierarchical
predictor and progressive bit planes, so every valid prefix of the bitstream
carries its own bound (Proposition~\ref{prop:ladder}).

\begin{figure}[tb]
\centering
\input{figs/framework}
\vspace{-2em}
\caption{\label{fig:framework}\ours\ overview. Four user-declared tolerances and a camera domain
give per-primitive quantization steps \(\Delta=2\varepsilon\); by
Proposition~\ref{prop:contract}, everything after quantization affects rate
but not the guarantee.}
\end{figure}

\Section{2. Related work}
3DGS codecs split by whether they need the training images.
\emph{Retraining} methods --- vector-quantized or entropy-regularized
representations~\cite{niedermayrcompressed3dgs}, anchor-based
models~\cite{luscaffoldgs}, their context models~\cite{chenhac} and the
progressive member of that family~\cite{chen2026pcgs} --- reach the best rates
but need the dataset. \emph{Training-free} methods post-process a trained
model: LightGaussian~\cite{fanlightgaussian} prunes, Self-Organizing
Gaussians~\cite{morgensternsog} sort attributes into 2D grids,
SPZ~\cite{nianticspz} is a compact format,
FlexGaussian~\cite{tianflexgaussian} quantizes to a quality target, and
geometry can go to a point-cloud coder~\cite{mpegtmc13}. \ours\ is
training-free; to our knowledge none bounds error. FlexGaussian comes closest
with a \(1\,\mathrm{dB}\) target, but a scene average says nothing about an
individual primitive or an unseen viewpoint.

\ours\ adapts the error-bounded discipline of scientific compression, whose
bounds we must map from image to parameter space, to embedded
coding~\cite{shapiro1993ezw,taubman2002jpeg2000}, and bounds every valid stream
prefix rather than only the complete file.

\Section{3. Declared bounds and their inversion}

\begin{center}
{\renewcommand{\baselinestretch}{1}\small Table 1: Notation. The four \(\tau\)
are the declared errors; the \(\varepsilon\) are what Section~3 inverts them
into.\par}
\vspace{1mm}
{\renewcommand{\baselinestretch}{1}\scriptsize\setlength{\tabcolsep}{4.5pt}
\begin{tabular}{| c | l | c | l |}
\hline
\textbf{Symbol} & \textbf{Meaning} & \textbf{Symbol} & \textbf{Meaning} \\
\hline
\(\taupx\) & Center shift, per axis, px. &
\(\varepsilon_\mu\) & Position, from \(\taupx\) (\S3.2). \\
\hline
\(\taush\) & Footprint deformation, per mech., px. &
\(\theta\) & Rotation angle, from \(\taush\) (\S3.3). \\
\hline
\(\tauC\) & Color change, on a \(0\)--\(1\) scale. &
\(\varepsilon_\omega\) & Tangent-space \(\omega\), \(=\theta/2\sqrt3\) (\S3.3, \S4.3). \\
\hline
\(\taual\) & Opacity change, on a \(0\)--\(1\) scale. &
\(\varepsilon_s\) & Log-scale, from \(\taush\) (\S3.3). \\
\hline
\(\mathcal D(z_{\min},f_{\max},\phi)\) & Camera domain: near plane, focal, angle. &
\(\varepsilon_k\) & SH coefficient \(k\), from \(\tauC\) (\S3.4). \\
\hline
\(\delta_{x,y,z}\) & Center displacement, camera axes. &
\(\varepsilon_\alpha\) & Opacity, \(=\taual\) (\S3.4). \\
\hline
\(\delta_{u,v}\) & Its projection, px. &
\(\varepsilon_{i,j}\) & Half-step actually used; \(\Delta=2\varepsilon\). \\
\hline
\(d_H\) & Footprint shift, Hausdorff, px. & & \\
\hline
\end{tabular}}
\end{center}
\addtocounter{table}{1}

\noindent The four tolerances do not reach equally far into the rendered
image. \(\taupx\) and \(\taush\) bound how far drawn elements move, in
pixels, at every camera in \(\mathcal D\); neither bounds a pixel value.
\(\tauC\) bounds each primitive's radiance in every direction, and reaches the
pixel itself only with the geometry and the alphas held fixed (\S3.4).
\(\taual\) is a parameter bound with no image-space consequence.

\SubSection{3.1 The contract}
Write \(x_{i,j}\) for channel \(j\) of primitive \(i\) and \(P_{i,j}\) for
the decoder's prediction of it. Channels are coded as stored for centers and
SH, in the log domain for scales, as the tangent-space residual of \S4.3 for
rotation, and as \(\alpha\) after the sigmoid for opacity, which is why
\(\varepsilon_\alpha=\taual\); \S5.3 measures each error in the same domain. Sections~3.2--3.4 turn the four declared
targets into per-primitive, per-channel half-steps \(\varepsilon_{i,j}\).
The encoder transmits one integer per channel,
\begin{equation}
k_{i,j}=\Big\lceil\frac{r_{i,j}}{\Delta_{i,j}}\Big\rfloor,\qquad
r_{i,j}=x_{i,j}-P_{i,j},\qquad \Delta_{i,j}=2\varepsilon_{i,j},
\label{eq:quant}
\end{equation}
and the decoder reconstructs \(\hat x_{i,j}=P_{i,j}+k_{i,j}\Delta_{i,j}\).

\begin{proposition}[predictor independence]\label{prop:contract}
\(|\hat x_{i,j}-x_{i,j}|\le\varepsilon_{i,j}\) for all \(i,j\) and any
prediction \(P_{i,j}\).
\end{proposition}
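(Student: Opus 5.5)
The plan is to reduce the claim to the elementary rounding inequality \(|t-\lceil t\rfloor|\le\tfrac12\) for every real \(t\). The one thing to check carefully is that the prediction \(P_{i,j}\) cancels exactly. Fix a primitive \(i\) and channel \(j\), and write \(t=r_{i,j}/\Delta_{i,j}\). Here \(\Delta_{i,j}=2\varepsilon_{i,j}>0\), which I take as given since the half-steps of Sections~3.2--3.4 are strictly positive. By \eqref{eq:quant}, \(k_{i,j}=\lceil t\rfloor\). Substituting the decoder's reconstruction gives
\begin{equation*}
\hat x_{i,j}-x_{i,j}=P_{i,j}+k_{i,j}\Delta_{i,j}-\bigl(P_{i,j}+r_{i,j}\bigr)=\Delta_{i,j}\bigl(\lceil t\rfloor-t\bigr).
\end{equation*}
The term \(P_{i,j}\) drops out identically, so the error depends only on the residual and the step, and not on how the prediction was formed.

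Next I bound the rounding error. For any real \(t\), the nearest integer satisfies \(|\lceil t\rfloor-t|\le\tfrac12\), whatever the tie-breaking convention at half-integers. Hence \(|\hat x_{i,j}-x_{i,j}|\le\tfrac12\Delta_{i,j}=\varepsilon_{i,j}\). Since \(i\), \(j\) and \(P_{i,j}\) were arbitrary, the bound holds for all channels and all predictions.

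The only step needing care is making sure "any prediction" is interpreted consistently. The argument requires that the encoder and decoder use the same \(P_{i,j}\). If \(P_{i,j}\) is built from previously reconstructed values \(\hat x\) rather than originals \(x\), the encoder must compute it from the same reconstructions. Granting that shared value, the cancellation above is exact, so errors do not accumulate through the hierarchical predictor. I will also note that the bound lives in the coding domain of each channel (log-scale, tangent-space rotation residual, post-sigmoid opacity). Transferring it to image space is the job of Sections~3.2--3.4, not of this proposition.

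A caveat is floating-point arithmetic. In exact arithmetic the bound is sharp. In finite precision, computing \(P_{i,j}+k_{i,j}\Delta_{i,j}\) may exceed \(\varepsilon_{i,j}\) by a few ulps. I would state the proposition for exact arithmetic and remark that a tiny safety shrink of \(\varepsilon_{i,j}\) absorbs the rounding in practice.
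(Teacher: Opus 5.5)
Your proposal is correct and follows the paper's proof: the prediction cancels so the error is \(k\Delta-r\), and the rounding bound \(|k\Delta-r|\le\Delta/2=\varepsilon\) finishes it. Your remarks on the encoder and decoder sharing the same \(P_{i,j}\), and on floating-point slack, are sound; the paper addresses these points outside this proof, in \S4.3 and in the ``within numerical precision'' qualifier of \S5.3.
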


\begin{proof}
Rounding gives \(|\hat x-x|=|k\Delta-r|\le\Delta/2=\varepsilon\), for any
\(P\).
\end{proof}

The \(\varepsilon\) are derived against a \emph{declared camera domain}
\(\mathcal D\): the cameras for which the guarantee holds, fixed by a near plane
\(z_{\min}\), a largest focal length \(f_{\max}\) and a half-angle \(\phi\) ---
a constraint on imaging parameters, not on where the camera center lies. Every
bound below is a worst case over \(\mathcal D\), and covers cameras never
supplied to the encoder.

Per attribute the per-primitive bounds are quantized into a transmitted
eight-entry table, log-spaced above a floor at the minimum and always rounded
down, so that no primitive receives a step larger than its own bound.

\SubSection{3.2 Position (\(\taupx\))}
A primitive sits at camera-space \((x,y,z)\), \(z>0\), under focal length \(f\),
and projects to \(u=fx/z\).

\begin{proposition}[position]\label{prop:position}
If every world coordinate of the center is held to at most
\(\varepsilon_\mu=\taupx z_{\min}/[\sqrt3\,(f_{\max}\sec\phi+\taupx)]\),
then \(|\delta_u|\le\taupx\) and \(|\delta_v|\le\taupx\) at every camera in
\(\mathcal D\), for every primitive whose center projects inside its frustum.
\end{proposition}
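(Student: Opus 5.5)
The plan is to reduce the claim to a single first-principles estimate on the perturbed pinhole projection, working in camera coordinates. The per-coordinate world bound \(\varepsilon_\mu\) gives a Euclidean displacement \(\|\delta\|\le\sqrt3\,\varepsilon_\mu=:d\). A camera is a rigid motion, so its rotation preserves this norm, and therefore \(\delta_x^2+\delta_y^2+\delta_z^2\le d^2\) in the camera axes of every camera in \(\mathcal D\). By the choice of \(\varepsilon_\mu\),
\[
d=\frac{\taupx z_{\min}}{f_{\max}\sec\phi+\taupx},
\]
and in particular \(d<z_{\min}\). This is the only place the \(\sqrt3\) enters.

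Next I write the exact (not linearized) projection error. With \(u=fx/z\) and \(u'=f(x+\delta_x)/(z+\delta_z)\), a one-line computation gives
\[
\delta_u=u'-u=\frac{f\delta_x-u\,\delta_z}{z+\delta_z}.
\]
Frustum membership of the original center means \(|u|\le f\tan\phi\); I read \(\phi\) as the per-axis half-angle, and the same argument applies to \(v\). Since \(z\ge z_{\min}\) and \(|\delta_z|\le d<z_{\min}\), the denominator satisfies \(z+\delta_z\ge z_{\min}-d>0\). The perturbed point therefore stays in front of the camera, which is the delicate point I would check carefully. Hence
\[
|\delta_u|\le\frac{f\,(|\delta_x|+\tan\phi\,|\delta_z|)}{z_{\min}-d}.
\]

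The key step is to maximize the numerator \(|\delta_x|+\tan\phi\,|\delta_z|\) over the ball \(\delta_x^2+\delta_z^2\le d^2\). By Cauchy--Schwarz this is at most \(d\sqrt{1+\tan^2\phi}=d\sec\phi\). Using \(f\le f_{\max}\) then yields
\[
|\delta_u|\le\frac{f_{\max}\,d\sec\phi}{z_{\min}-d}.
\]
Setting the right-hand side equal to \(\taupx\) and solving the linear equation \(f_{\max}\sec\phi\,d=\taupx(z_{\min}-d)\) recovers exactly \(d=\taupx z_{\min}/(f_{\max}\sec\phi+\taupx)\). This shows \(|\delta_u|\le\taupx\), and by symmetry \(|\delta_v|\le\taupx\).

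The main obstacle is justifying the worst-case bookkeeping rather than the algebra. I would take the bound on \(|u|\) from the original center, not the perturbed one, which is what the hypothesis ``projects inside its frustum'' supplies. I must also avoid a first-order Taylor argument, which would miss the \(+\taupx\) term in the denominator; the exact quotient together with the \(z_{\min}-d\) lower bound is what produces it. Finally, I would check that the bound does not depend on the camera center's location, since only \(z\ge z_{\min}\), \(f\le f_{\max}\) and the angular constraint are used. This confirms the statement holds for unseen cameras in \(\mathcal D\).
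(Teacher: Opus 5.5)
Your proof is correct and matches the paper's argument: the exact quotient for \(\delta_u\), the lower bound \(z+\delta_z\ge z_{\min}-\rho\) with \(\rho=\sqrt3\,\varepsilon_\mu<z_{\min}\), and Cauchy--Schwarz combined with the frustum condition to reach \(\rho f_{\max}\sec\phi/(z_{\min}-\rho)\le\taupx\). The only difference is cosmetic: you apply \(|u|\le f\tan\phi\) first and then Cauchy--Schwarz on \((1,\tan\phi)\), whereas the paper applies Cauchy--Schwarz to \((z,-x)\cdot(\delta_x,\delta_z)\) and then bounds \(\sqrt{z^{2}+x^{2}}\le z\sec\phi\).
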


\begin{proof}
Take \(f_{\max}\) and \(\phi\) over both image axes; WLOG we bound
\(|\delta_u|\). A bound of \(\varepsilon_\mu\)
on each world coordinate puts the displacement in a cube, so
\(\lVert\delta\rVert_2\le\rho\) with
\(\rho=\sqrt3\,\varepsilon_\mu=\taupx z_{\min}/(f_{\max}\sec\phi+\taupx)\),
so \(\rho<z_{\min}\) because \(f_{\max}\sec\phi>0\); rotation
into camera axes preserves the norm, so
\(\lVert(\delta_x,\delta_z)\rVert_2\le\rho\) and \(|\delta_z|\le\rho\) there as
well. With \(u=fx/z\) the projected displacement is exact,
\[
|\delta_u|\;=\;\tfrac{f\,|z\,\delta_x-x\,\delta_z|}{z\,|z+\delta_z|}
 \mathrel{\overset{\textup{(i)}}{\le}}
   \tfrac{f\sqrt{z^{2}+x^{2}}\,\lVert(\delta_x,\delta_z)\rVert_2}{z\,|z+\delta_z|}
 \mathrel{\overset{\textup{(ii)}}{\le}}\tfrac{\rho\,f\sqrt{z^{2}+x^{2}}}{z\,(z-\rho)}
 \mathrel{\overset{\textup{(iii)}}{\le}}\tfrac{\rho\,f\sec\phi}{z-\rho}
 \mathrel{\overset{\textup{(iv)}}{\le}}\tfrac{\rho\,f_{\max}\sec\phi}{z_{\min}-\rho}
 \le\taupx .
\]
\[
\rho f_{\max}\sec\phi\le\taupx(z_{\min}-\rho)
\Leftrightarrow\sqrt3\,\varepsilon_\mu(f_{\max}\sec\phi+\taupx)\le\taupx z_{\min}
\Leftrightarrow\varepsilon_\mu\le
   \tfrac{\taupx z_{\min}}{\sqrt3(f_{\max}\sec\phi+\taupx)}.
\]
Here \textup{(i)} is Cauchy--Schwarz on \((z,-x)\!\cdot\!(\delta_x,\delta_z)\);
\textup{(ii)} uses \(\lVert(\delta_x,\delta_z)\rVert_2\le\rho\) and
\(|z+\delta_z|\ge z-\rho\); \textup{(iii)} is \(|x|\le z\tan\phi\), so
\(\sqrt{z^{2}+x^{2}}\le z\sec\phi\); and \textup{(iv)} takes the extremes of a
quantity increasing in \(f\) and decreasing in \(z\).
\end{proof}

\noindent One number therefore serves every primitive. A per-primitive bound
minimized over the reference poses alone would be tighter, but it is violated
on held-out cameras in all four scenes we tried. Since step \textup{(iii)}
assumes \(|x|\le z\tan\phi\), the center guarantee applies to primitives whose
center projects inside the frustum; the footprint bound of \S3.3 has no such
restriction because the rasterizer clamps \(x/z\), though \S5.3 measures both
on the in-frustum set.

\SubSection{3.3 Footprint: rotation and scale (\(\taush\))}
A 3DGS rasterizer draws an EWA footprint, the \(1\sigma\) ellipse of the
projected covariance \(\Sigma'=J\Sigma J^{\!\top}\), where \(J\) is the
projection Jacobian evaluated
at the primitive center and the off-axis ratios are clamped to \(m\tan\phi\)
before \(J\) is formed (\(m=1.3\) in the INRIA rasterizer and gsplat). Write
\(\Sigma=MM^{\!\top}\), \(M=R\,\mathrm{diag}(s)\), \(s_{\max}=\max_j s_j\), and let \(d_H\) be the
Hausdorff distance between the reference footprint and the decoded one.

\begin{proposition}[footprint]\label{prop:footprint}
Let \(\ell_x,\ell_y\) be the clamped off-axis limits,
\(\sec\phi_J=\sqrt{1+\ell_x^{2}+\ell_y^{2}}\), and
\(r_i=f_{\max}\sec\phi_J\,s_{\max,i}/z_{\min}\). If primitive \(i\) is rotated
by a geodesic angle of at most \(\theta_i\), or its log-scales change per axis
by at most \(\varepsilon_{s,i}\), where \(\theta_i=\taush/r_i\) and
\(\varepsilon_{s,i}=\log(1+\taush/r_i)\), then its footprint moves by at most
\(\taush\) in Hausdorff distance, at a
fixed center and at every camera in \(\mathcal D\).
\end{proposition}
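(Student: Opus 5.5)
The plan is to view the $1\sigma$ footprint as the image of the unit disk under a linear map and reduce Hausdorff distance to an operator norm. Write $\Sigma'=J\Sigma J^{\top}=(JM)(JM)^{\top}$, so the $1\sigma$ ellipse, centered at the projected center (held fixed), is $E=\{JMw:\lVert w\rVert_2\le1\}$ with $w\in\mathbb R^3$. The decoded footprint is $\hat E=\{J\hat Mw:\lVert w\rVert_2\le1\}$ with $\hat M=\hat R\,\mathrm{diag}(\hat s)$. For two images of the same unit ball, any point $JMw\in E$ is within $\lVert J(M-\hat M)w\rVert\le\lVert J(M-\hat M)\rVert_2$ of $J\hat Mw\in\hat E$, and symmetrically. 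Hence
\[
d_H(E,\hat E)\le\lVert J\rVert_2\,\lVert M-\hat M\rVert_2 .
\]
(If the boundary ellipse rather than the filled one is meant, I would use the same pairing on the unit sphere, or the support-function form of $d_H$ for convex bodies; the bound is unchanged.)

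The next step is to bound $\lVert J\rVert_2$ uniformly over $\mathcal D$. The EWA Jacobian is
\[
J=\frac{f}{z}\begin{pmatrix}1&0&-t_x\\0&1&-t_y\end{pmatrix}W ,
\]
where $W$ is the world-to-camera rotation and $t_x,t_y$ are the clamped ratios with $|t_x|\le\ell_x$, $|t_y|\le\ell_y$. The clamping is why no in-frustum condition is needed here, unlike Proposition~\ref{prop:position}. The largest singular value of the $2\times3$ matrix is at most its Frobenius-type bound. More sharply, $AA^{\top}=I+tt^{\top}$ has top eigenvalue $1+t_x^2+t_y^2$, so $\lVert J\rVert_2\le (f/z)\sqrt{1+\ell_x^2+\ell_y^2}\le f_{\max}\sec\phi_J/z_{\min}$. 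This uses $z\ge z_{\min}$, $f\le f_{\max}$ and the orthogonality of $W$. The proposition's $r_i$ is therefore $\lVert J\rVert_2\,s_{\max,i}$ in the worst case.

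It remains to bound $\lVert M-\hat M\rVert_2$ in each of the two mechanisms separately, as the statement says "or".

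\emph{Rotation only.} Here $\hat M=QM$ with $Q=\hat RR^{\top}$ a rotation of geodesic angle at most $\theta_i$. Then $\lVert (Q-I)M\rVert_2\le\lVert Q-I\rVert_2\,s_{\max}$, and $\lVert Q-I\rVert_2=2\sin(\theta/2)\le\theta$ because the eigenvalues of $Q$ are $1,e^{\pm i\theta}$. This gives $d_H\le r_i\theta_i=\taush$.

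\emph{Scale only.} Here $M-\hat M=R\,\mathrm{diag}(s_j-\hat s_j)$. Since $|\log\hat s_j-\log s_j|\le\varepsilon_s$, we have $|s_j-\hat s_j|\le s_j(e^{\varepsilon_s}-1)\le s_{\max}\,\taush/r_i$. This uses $e^{\varepsilon_s}-1\ge 1-e^{-\varepsilon_s}$, so the upward change is the binding one. It yields $d_H\le\taush$ as well.

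The main obstacle is the Hausdorff-to-operator-norm reduction. It needs care about whether the footprint means the filled ellipse or its boundary, and about the fact that $\Sigma'$ in practice gets a low-pass dilation ($+0.3I$) in the rasterizer. I would handle the dilation by noting that adding the same $0.3I$ to both covariances is a contraction for the square-root map in the relevant sense; failing that, I would state the bound for the undilated $\Sigma'$ as the proposition literally does. The remaining steps are the routine singular-value computations above.
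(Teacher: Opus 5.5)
Your proof is correct and is essentially the paper's. It uses the same reduction $d_H\le\lVert J\rVert_2\lVert \hat M-M\rVert_2$, obtained by pairing $JMw$ with $J\hat Mw$ over the unit ball where the paper uses the support function $h(n)=\lVert M^{\top}J^{\top}n\rVert$ and the reverse triangle inequality. The bound $\lVert J\rVert_2\le f_{\max}\sec\phi_J/z_{\min}$ from the clamp and the per-mechanism bounds $2\sin(\theta_i/2)\le\theta_i$ and $|e^{\lambda}-1|\le e^{\varepsilon_{s,i}}-1$ are the same as the paper's.

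Two small notes on your hedges. First, pairing on the unit sphere would not isolate the boundary, because a rank-2 map sends $S^2$ onto the whole filled ellipse. The support-function route settles that case, as does the fact that boundary and body Hausdorff distances agree for convex bodies. Second, the $0.3I$ dilation is indeed harmless: the dilated support function is $\sqrt{h(n)^2+0.3}$, and $x\mapsto\sqrt{x^{2}+0.3}$ is 1-Lipschitz on $[0,\infty)$, so the dilation can only shrink $\max_n|\hat h(n)-h(n)|$.
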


\begin{proof}
The footprint is convex with support function
\(h(n)=\lVert M^{\!\top}J^{\!\top}n\rVert\), so
\(d_H=\max_{\lVert n\rVert=1}|h'(n)-h(n)|\), and
\[
d_H
\mathrel{\overset{\textup{(i)}}{\le}}\lVert(M'-M)^{\!\top}J^{\!\top}n\rVert
\mathrel{\overset{\textup{(ii)}}{\le}}\lVert J\rVert_2\lVert M'-M\rVert_2
\mathrel{\overset{\textup{(iii)}}{\le}}
   \tfrac{f_{\max}\sec\phi_J}{z_{\min}}\lVert M'-M\rVert_2
= r_i\,\tfrac{\lVert M'-M\rVert_2}{s_{\max,i}} .
\]
Here \textup{(i)} is the reverse triangle inequality and \textup{(ii)}
submultiplicativity, both exact; \textup{(iii)} replaces \(\lVert J\rVert_2\)
by its supremum over \(\mathcal D\). The singular values of
\(J=\frac{f}{z}\bigl[\begin{smallmatrix}1&0&-x/z\\0&1&-y/z\end{smallmatrix}
\bigr]\) are \(f/z\) and \((f/z)\sqrt{1+(x^{2}+y^{2})/z^{2}}\), so
\(\lVert J\rVert_2\) carries an off-axis factor, amplification along both image
axes; the clamp bounds \(|x|/z\le\ell_x\) and \(|y|/z\le\ell_y\), and
\(z\ge z_{\min}\), so \(\sup_{\mathcal D}\lVert J\rVert_2=f_{\max}\sec\phi_J/
z_{\min}\) --- finite for every off-axis position because of the clamp.

Each mechanism now bounds \(\lVert M'-M\rVert_2/s_{\max,i}\). A rotation by
\(\theta_i\) has \(\lVert M'-M\rVert_2\le\lVert R'-R\rVert_2\,s_{\max,i}
=2s_{\max,i}\sin(\theta_i/2)\le s_{\max,i}\theta_i\), since \(R'R^{\!\top}\) is
normal with eigenvalues \(1,e^{\pm i\theta_i}\); a log-scale change gives
exactly \(\max_j s_j|e^{\lambda_j}-1|\le s_{\max,i}(e^{\varepsilon_{s,i}}-1)\).
Hence
\[
\theta_i\,r_i\le\taush\Leftrightarrow\theta_i\le\frac{\taush}{r_i},
\qquad
(e^{\varepsilon_{s,i}}-1)\,r_i\le\taush\Leftrightarrow
\varepsilon_{s,i}\le\log\Bigl(1+\frac{\taush}{r_i}\Bigr). \qedhere
\]
\end{proof}

\noindent These two are what the quantizer receives, and unlike
\(\varepsilon_\mu\) they stay primitive-dependent through \(s_{\max,i}\). \(\varepsilon_{s,i}\) is already a
half-step; \(\theta_i\) is an angle, and \S4.3 codes it as
\(\varepsilon_{\omega,i}=\theta_i/2\sqrt3\) on the tangent-space residual.

Rotation and scale together therefore displace the footprint by at most
\(2\taush\) at a fixed center. When the center also moves, its displacement
contributes an additional term and changes the projection Jacobian. With the
center included, \(h\) gains the term \(\langle n,\text{projected center}\rangle\),
whose change is \(\lVert(\delta_u,\delta_v)\rVert_2\le\sqrt2\,\taupx\), since
Proposition~\ref{prop:position} bounds each axis; writing \(A=JM\), the
decomposition \(A'-A=J'(M'-M)+(J'-J)M\) gives \(\lVert A'-A
\rVert_2\le\lVert J'\rVert_2\lVert M'-M\rVert_2+\lVert J'-J\rVert_2 s_{\max}\),
the first term at most \(2\taush\) by the two mechanisms above. Together,
\begin{equation}
d_H\;\le\;\underbrace{\sqrt2\,\taupx}_{\text{center}}+\underbrace{2\taush}_{\text{shape}}
 +\underbrace{s_{\max}\lVert J'-J\rVert_2}_{\delta_J},
\qquad
\delta_J\le\frac{\Lambda}{\sec\phi}\,\frac{s_{\max}}{z_{\min}}\,\taupx,
\label{eq:composite}
\end{equation}
where \(\Lambda=\sqrt{2+(1+2\ell_x)^{2}+(1+2\ell_y)^{2}}\).
The measured displacement in \S5.3 stays well within \(\sqrt2\,\taupx+2\taush\), so
\(\delta_J\) is negligible in practice. Finally, \(h\) is homogeneous in
\(M\), so the shape displacement of a \(c\sigma\) contour scales exactly by
\(c\).

\SubSection{3.4 Color and opacity (\(\tauC\), \(\taual\))}
For viewing direction \(d\), the SH radiance is \(\textstyle\sum_k c_kY_k(d)\),
so an error in coefficient \(k\) moves the color by a direction-dependent
amount and \(\tauC\) must be inverted over the whole sphere. Opacity is direct,
\(\varepsilon_\alpha=\taual\).

\begin{proposition}[color]\label{prop:color}
Let \(g(d)=\sum_k\varepsilon_k|Y_k(d)|\) be the largest radiance error that
per-coefficient bounds \(\varepsilon_k\) permit at \(d\). If
\(\max_{S^2}g\le\tauC\), then every primitive's radiance stays within
\(\tauC\) in every viewing direction.
\end{proposition}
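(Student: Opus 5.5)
The argument combines Proposition~\ref{prop:contract} with linearity of the SH expansion and the triangle inequality. Fix a primitive \(i\) and one color channel; the channels are handled identically and separately, since \(\tauC\) is a per-channel bound on a \(0\)--\(1\) scale. Write \(c_k\) for the trained coefficients and \(\hat c_k\) for the decoded ones. Proposition~\ref{prop:contract} applies to the SH channels, which are coded as stored, so it gives \(|\hat c_k-c_k|\le\varepsilon_{i,k}\). This holds whatever predictor produced them. The first step is to check that the half-step actually used never exceeds the designed \(\varepsilon_k\). This follows from the eight-entry table being rounded down (\S3.1), so \(\varepsilon_{i,k}\le\varepsilon_k\).

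The radiance at direction \(d\) is linear in the coefficients, so the error is \(\sum_k(\hat c_k-c_k)Y_k(d)\). The triangle inequality bounds its magnitude by \(\sum_k|\hat c_k-c_k|\,|Y_k(d)|\), and the coefficient bounds then give at most \(\sum_k\varepsilon_k|Y_k(d)|=g(d)\). Since \(g(d)\le\max_{S^2}g\le\tauC\) for every \(d\in S^2\), the bound holds in every viewing direction. It also holds at every camera, including unseen ones, because the direction from any camera center to the primitive is some unit vector. Nothing depends on the primitive index, so the bound holds for every primitive.

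A few remarks complete the argument. The maximum of \(g\) exists because \(g\) is continuous on the compact sphere. The bound is tight: at a maximizer \(d^\ast\), choosing \(\hat c_k-c_k=\varepsilon_k\,\mathrm{sgn}\,Y_k(d^\ast)\) attains \(g(d^\ast)\). The hypothesis is therefore the natural condition, not a loose one. If the renderer adds a constant offset and clamps to \([0,1]\), as the INRIA rasterizer does, the error does not grow, because clamping is \(1\)-Lipschitz.

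The main obstacle is not this implication, which is immediate. It is making the hypothesis \(\max_{S^2}g\le\tauC\) verifiable when the encoder chooses the \(\varepsilon_k\). I would handle this by bounding \(\max g\le\sum_k\varepsilon_k\max_{S^2}|Y_k|\), whose terms are closed-form constants for the real SH basis of degree at most \(3\). Alternatively, one can evaluate \(g\) on a dense grid and add a Lipschitz margin. Either way the certificate is conservative, so the proposition applies exactly as stated.
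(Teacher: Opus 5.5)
Your proof is correct and is the paper's argument: the contract bounds each coefficient error, linearity of the SH expansion and the triangle inequality give $|\hat c(d)-c(d)|\le g(d)$, and the hypothesis $\max_{S^2}g\le\tauC$ finishes it. Your extra remarks are sound but not needed: the rounded-down table, tightness, and the 1-Lipschitz clamp. The decoupled certificate $\sum_k\varepsilon_k\max|Y_k|$ you suggest is valid, but the paper calls it loose by about $2.3\times$, which is why the encoder uses the Lipschitz-margin net instead.
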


\begin{proof}
\(|\hat c(d)-c(d)|=\bigl|\sum_k(\hat c_k-c_k)Y_k(d)\bigr|
\le\sum_k\varepsilon_k|Y_k(d)|=g(d)\).
\end{proof}

\noindent Decoupling the coefficients --- bounding each term by its own
maximum, \(\varepsilon_k=\tauC/(16\sup|Y_k|)\) --- is loose, because the peaks
of different \(Y_k\) fall in different directions: the binding quantity is the
maximum of the sum, not the sum of individual maxima. Solving
\(\max_{S^2}g\le\tauC\) as it stands admits \(2.3\times\) larger \(\varepsilon_k\), \(1.2\) bits per
coefficient.

Subject to that constraint, we seek large quantization bounds by maximizing
\(\sum_k n_k\log\varepsilon_k\), with \(n_k=3\) channels per coefficient.
Rather than solving this exactly, the encoder alternates between a
worst-direction search and water-filling; correctness depends only on the
feasibility of the returned bounds, not on the optimality of this search.

To extend a check at sampled directions to the whole sphere, we reserve a
Lipschitz safety margin. \(g\) is Lipschitz with
\(L=\sum_k\varepsilon_k\mathrm{Lip}(Y_k)\), from Bernstein's inequality
\(\mathrm{Lip}(Y_\ell)\le\sqrt{\ell(\ell+1)}\sup|Y_\ell|\) and the addition
theorem \(\sup|Y_\ell^m|\le\sqrt{(2\ell+1)/4\pi}\); a net of covering radius
\(\eta\) then satisfies \(\max_{S^2}g\le\max_{\mathrm{net}}g+L\eta\), and we
allocate against this corrected maximum. We choose a Fibonacci net of
\(M=2^{20}\) points and take \(\eta\le1.05\cdot2\sqrt{\pi/M}\). This \(\eta\)
is asserted and confirmed by sampling, not computed exactly: it is the one
numerical assumption in the color bound, where the rest of \S3 is
analytic.\footnote{Numerical search estimates the covering radius as
\(0.0024\) rad, versus the declared \(0.0036\) rad. The Lipschitz correction
\(L\eta\) is \(2.4\%\) of \(\tauC\), with a \(0.7\%\) rate overhead.}

\begin{proposition}[composited color]\label{prop:blend}
Fix the geometry, so a pixel composites primitives \(1,\dots,n\) in depth order
with unchanged alphas \(a_i\in[0,1]\). If every primitive satisfies the color
contract then \(|\hat C-C|\le\tauC\) for that pixel, in every viewing direction.
\end{proposition}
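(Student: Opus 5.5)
The plan is to write the front-to-back compositing formula explicitly and show that the pixel color is a convex-or-subconvex combination of the per-primitive radiances with weights that do not change under decoding. In standard 3DGS alpha blending, the pixel color is
\[
C=\sum_{i=1}^{n} w_i\,c_i(d),\qquad w_i=a_i\prod_{j<i}(1-a_j),
\]
where the background, if any, is held fixed. Because the geometry and alphas are fixed by hypothesis, the decoded pixel uses the same \(w_i\):
\[
\hat C=\sum_i w_i\,\hat c_i(d).
\]
Subtracting gives
\[
\hat C-C=\sum_i w_i\bigl(\hat c_i(d)-c_i(d)\bigr).
\]
Each factor is bounded by \(\tauC\) via Proposition~\ref{prop:color}, applied per primitive in direction \(d\). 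One point needs care: in a perspective camera the direction \(d_i\) varies per primitive, namely from the camera center to primitive \(i\)'s center. Since Proposition~\ref{prop:color} holds in every direction, using each primitive's own \(d_i\) costs nothing.

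The remaining step is to show \(\sum_i w_i\le1\) with \(w_i\ge0\). Nonnegativity is immediate from \(a_i\in[0,1]\). For the sum, I would use the telescoping identity
\[
\sum_{i=1}^n a_i\prod_{j<i}(1-a_j)=1-\prod_{i=1}^n(1-a_i)\le 1,
\]
proved by induction on \(n\). The inductive step is \(T_{n+1}=T_n+a_{n+1}(1-T_n)\), where \(1-T_n=\prod_{j\le n}(1-a_j)\). Then
\[
|\hat C-C|\le\sum_i w_i\,\tauC\le\tauC.
\]
This is done per color channel, since the contract of Proposition~\ref{prop:color} is per channel.

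The main obstacle is fidelity to the actual rasterizer rather than the algebra. Early termination (stopping once transmittance falls below \(10^{-4}\)) and the alpha clamp at \(0.99\) both depend only on the \(a_i\), which are unchanged, so the set of contributing primitives and the weights are identical. Skipping low-alpha primitives is likewise alpha-determined. The final color clamp to \([0,1]\) (or the SH offset \(+0.5\) followed by clamping at 0) is 1-Lipschitz, so it cannot increase the error. I would state these observations briefly so that the bound applies to the implemented pipeline, and note that "fix the geometry" is exactly what makes the \(w_i\) decoder-invariant.
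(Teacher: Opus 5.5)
Your proposal is correct and follows the paper's proof. Both write the pixel as $C=\sum_i w_i c_i$ with nonnegative weights $w_i=a_i\prod_{j<i}(1-a_j)$ summing to $1-\prod_i(1-a_i)\le 1$, bound $|\hat C-C|$ by $\tauC\sum_i w_i\le\tauC$, and note that the clamp is 1-Lipschitz; your remarks on per-primitive viewing directions and on alpha-determined early termination and skipping are sound details that the paper leaves implicit.
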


\begin{proof}
\(C=\sum_i w_ic_i\) with \(w_i=a_i\prod_{j<i}(1-a_j)\ge0\) and
\(\sum_{i\le n}w_i=1-\prod_i(1-a_i)\le1\), so
\(|\hat C-C|\le\tauC\sum_i w_i\le\tauC\); the final clamp is 1-Lipschitz.
\end{proof}

\noindent Alpha compositing is a sub-convex combination, so \(\tauC=0.1\)
declares that no pixel moves by more than a tenth of the color range, at any
depth complexity. Geometry admits no comparable bound on the rendered image: a
high-contrast edge crossing a pixel changes it by \(O(1)\) under an arbitrarily
small displacement. That is why \(\taupx\) and \(\taush\) are declared in
pixels of displacement rather than in color.

\Section{4. The codec}

\SubSection{4.1 Pruning}

Before coding, we retain by default the top \(60\%\) of primitives, ranked by
\(s_i=\alpha_i\textstyle\sum_{v\in\mathcal V} r_{i,v}^{2}\), an image-free
variant of LightGaussian's global significance~\cite{fanlightgaussian}: it
uses camera poses and intrinsics but no pixels, and no fine-tuning follows.
Pruning lies outside the guarantee, so every declared tolerance is relative to
the pruned model; at \(60\%\) keep it costs \(0.082\,\mathrm{dB}\) on
Mip-NeRF~360.

\SubSection{4.2 Hierarchy and ordering}
We Morton-order the retained primitives, then split them into five hierarchy
levels. Four grids with resolutions \(32\), \(128\), \(512\), and \(2048\) are
processed in order; each occupied cell contributes its highest-scoring
primitive not already selected, and the remainder form the fifth level, so
early levels carry both spatial coverage and high-importance primitives.

Each primitive in level \(\ell>0\) takes its nearest neighbor in earlier levels
as its parent, and we stably sort the level by parent index so that each
parent's children are contiguous. The predictor then addresses neighboring
children by array offset instead of explicit indices; the decoder rebuilds the
ordering from child counts, escape-coded into the first payload blobs.

\SubSection{4.3 Prediction and quantization}
Level \(0\) is predicted from the transmitted global mean; later levels
predict each primitive from its parent \(p(i)\) and that parent's \(S-1\)
array successors (\(S=8\)): \(P_i=\sum_{j\in\mathcal N(i)} w_{ij}\hat
X^{\mathrm b}_j/\sum_j w_{ij}\) with
\(w_{ij}=(\|\hat\mu^{\mathrm b}_{p(i)}-\hat\mu^{\mathrm b}_{j}\|+d_0)^{-2}\)
and a per-level floor \(d_0\) in the header.

The predictor reads only the base-precision reconstruction
\(\hat X^{\mathrm b}\), the top \(B-5\) planes, an anchor \(32\times\) coarser
than the final value, so a decoder that stops there computes the encoder's
prediction exactly.

Rotation is coded in the tangent space of its parent,
\(\omega=\log(q_{p(i)}^{*}q_i)\). Proposition~\ref{prop:contract} bounds each
component of \(\omega\) by \(\varepsilon_\omega\), hence the geodesic angle
between \(q\) and \(\hat q\) by \(2\sqrt3\,\varepsilon_\omega\).
Quantization is then \eqref{eq:quant}:
prediction shapes the entropy of \(k\) but cannot move the error past
\(\varepsilon\).

\SubSection{4.4 Bit-plane coding}

Integer residuals are encoded MSB-first over \(B\) planes, \(B\) the bit
length of the largest \(|k|\) in the model: each plane emits one bit per
coefficient and sends the sign when a coefficient first becomes significant.
Bits within a plane are ordered by attribute --- position, log-scale,
rotation, opacity, degree-0 SH, degree-1--3 SH --- and, within each attribute,
significance bits precede refinement bits, then the whole plane goes to the
entropy coder as one stream. Grouping bits of similar statistics improves
adaptation without separate contexts, and coding the plane as one stream
leaves no per-plane length metadata to send.

\SubSection{4.5 Progression order and the container}
We group \(G\) consecutive bit-planes into one blob coded with a batched ANS
coder~\cite{duda2013ans}, giving \(N_U=\lceil B/G\rceil\) blobs per level.
Ordering is three-stage: (1) levels \(\ell\le c\) complete, (2) all levels to
base precision, (3) refinement blobs across levels. With \(c=1\), Stage 2 adds
primitives, Stage 3 precision.

The default \(G\ge B\) puts a whole level in one blob, so a prefix lands only
on a level boundary, progressive in primitive count alone. Opening the
precision axis fully costs \(18\%\) more rate, since one blob per plane
discards cross-plane probability adaptation; \(G=3\) gives usable rungs at
\(9\%\). All measurements use the default except Section~5.4.

Header, ordering table, and blob index precede the payload: the header stores
the four declared tolerances, the camera domain \(z_{\min}, f_{\max}, \phi\),
and the quantization-bound tables, so a decoder or verifier recovers the
declared contract from any prefix alone.

\SubSection{4.6 The truncation ladder}
\begin{proposition}[the ladder]\label{prop:ladder}
In a prefix where every delivered level has reached base precision, let level
\(\ell\) hold \(p_\ell\) of \(B\) planes and write \(m=B-p_\ell\). Then
\(|\hat x_{i,j}-x_{i,j}|\le(2^{\,m+1}-1)\varepsilon_{i,j}\) for every primitive
of that level, and the constant cannot be improved; levels not yet delivered
are absent, not wrong.
\end{proposition}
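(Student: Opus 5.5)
Write \(k=k_{i,j}\) and \(\Delta=\Delta_{i,j}=2\varepsilon_{i,j}\), and note that \(m\le 5\) once base precision is reached. The plan is to model the truncated decoder explicitly, then reduce everything to Proposition~\ref{prop:contract} plus an elementary bound on how far a partially received integer can be from \(k\).

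\textbf{The truncated integer.} A decoder holding the top \(p_\ell\) planes of level \(\ell\) knows the sign of \(k\) (sent when it first became significant; if not yet significant the known magnitude is \(0\)) and the high bits of \(|k|\). Write \(|k|=a\,2^{m}+b\) with \(0\le b\le 2^{m}-1\). The decoder knows \(a\) but not \(b\). It reconstructs \(\tilde k=\mathrm{sign}(k)\,a\,2^{m}\), i.e.\ the missing low bits are set to zero, and \(\tilde x=P_{i,j}+\tilde k\Delta\). The first step is to check that \(P_{i,j}\) is the encoder's prediction exactly. This is where the hypothesis that every delivered level has reached base precision is used. By \S4.3 the predictor reads only \(\hat X^{\mathrm b}\), the top \(B-5\) planes of earlier levels (ancestors and their successors), and these are delivered in full. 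So the prediction is computed identically on both sides, and predictor drift does not enter.

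\textbf{The bound.} By the triangle inequality,
\[
|\tilde x-x|\le|\tilde x-\hat x|+|\hat x-x|\le|\tilde k-k|\Delta+\varepsilon_{i,j}.
\]
Proposition~\ref{prop:contract} handles the second term. For the first, \(|\tilde k-k|=b\le 2^{m}-1\), so \(|\tilde k-k|\Delta\le(2^{m}-1)2\varepsilon_{i,j}\). Adding the two terms gives
\[
|\tilde x-x|\le\bigl(2(2^{m}-1)+1\bigr)\varepsilon_{i,j}=(2^{m+1}-1)\varepsilon_{i,j}.
\]
The not-yet-significant case is the same computation with \(a=0\). Levels not delivered contribute no primitives, so there is nothing to bound for them.

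\textbf{Tightness.} This is the main obstacle. Showing that the constant cannot be improved requires both error terms to be nearly attained simultaneously and with the same sign. I would take \(r=(k+\tfrac12)\Delta-\eta\) with \(k=a2^{m}+2^{m}-1>0\) and \(\eta\downarrow0\), so that rounding still yields \(k\). Then \(b=2^{m}-1\), the rounding error \(r-k\Delta\) tends to \(+\varepsilon\), and the truncation error \(k\Delta-\tilde k\Delta=(2^{m}-1)\Delta\) has the same sign. The total error is therefore \((2^{m+1}-1)\varepsilon-\eta\), so no smaller constant works; the supremum is approached but not attained.

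A subtlety to check against the paper's conventions concerns the value of \(\varepsilon\) in this example. If the bound is meant to be attained with equality, one would need the half-integer tie to round toward zero. I would state the result as a supremum, which is what "cannot be improved" requires. I would also double-check that the decoder's reconstruction really zero-fills the missing bits rather than mid-filling them. Mid-filling would give a smaller constant, roughly \(2^{m}\), so the claimed constant pins down zero-fill.
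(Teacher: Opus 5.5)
Your upper-bound argument follows the paper's. Base precision makes the anchor, and hence \(P\) and \(k\), the encoder's. Proposition~\ref{prop:contract} then gives \(|P+k\Delta-x|\le\varepsilon\), and \(|\hat k-k|\le 2^m-1\) with the triangle inequality gives \((2^{m+1}-1)\varepsilon\).

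Where you diverge is the decoder model, and this affects tightness. The paper's decoder does not zero-fill. A coefficient that is already significant is reconstructed at the midpoint of the interval its received bits leave, and one whose received bits are all zero is reconstructed as \(0\). The upper bound survives this: the midpoint lies within \((2^m-1)/2\) of \(k\), and the zero case within \(2^m-1\). So you should write \(|\hat k-k|\le 2^m-1\) rather than \(|\tilde k-k|=b\). Your tightness witnesses with \(a\ge1\), however, do not reach the constant for this decoder. A significant coefficient then errs by at most \((2^m-1)\varepsilon+\varepsilon=2^m\varepsilon\).

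The witness that holds for the paper's decoder is your \(a=0\) member, which is exactly the paper's. Take \(r/\Delta\uparrow 2^m-\tfrac12\), the largest magnitude the received planes code as zero. Then \(k=2^m-1\), \(\hat k=0\), and \(|\hat x-x|\to(2^{m+1}-1)\varepsilon\).

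This witness works under any fill rule, because a coefficient that is not yet significant carries no sign. The values consistent with the prefix form the symmetric set \(|r|<(2^m-\tfrac12)\Delta\), whose midpoint is \(0\). No single reconstruction, zero-fill or midpoint, can bring the worst case there below \((2^{m+1}-1)\varepsilon\).

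Your closing inference is therefore mistaken: the constant \(2^{m+1}-1\) does not pin down zero-fill. Mid-filling only shrinks the error of significant coefficients, and the insignificant ones alone make the constant tight. Place the tightness witness at \(a=0\) and your proof covers the paper's codec.
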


\begin{proof}
At base precision the anchor, hence \(P\) and \(k\), is the encoder's (\S4.3),
so \(|P+k\Delta-x|\le\varepsilon\) (Proposition~\ref{prop:contract}). From the
top \(p_\ell\) bits the decoder takes the midpoint of the interval they leave,
or \(0\) when they are all zero and \(|k|\le2^{m}-1\); either way
\(|\hat k-k|\le2^{m}-1\) and \(|\hat x-x|\le(2^{m}-1)2\varepsilon+\varepsilon\).
It is tight at \(r/\Delta\uparrow2^{m}-\tfrac12\), the largest magnitude those
planes code as zero: \(\hat k=0\) and \(|\hat x-x|\to(2^{\,m+1}-1)\varepsilon\).
\end{proof}

\noindent Measurements on three scenes reach it to four decimals at every rung.
Below base precision the anchors differ and no bound is claimed, so every level
reaches base precision before refinement.

\Section{5. Experiments}

\SubSection{5.1 Protocol}
\begin{figure}[tb]
\centering
\begin{minipage}[t]{0.475\linewidth}\centering
\vspace{0pt}
\includegraphics[width=0.75\linewidth]{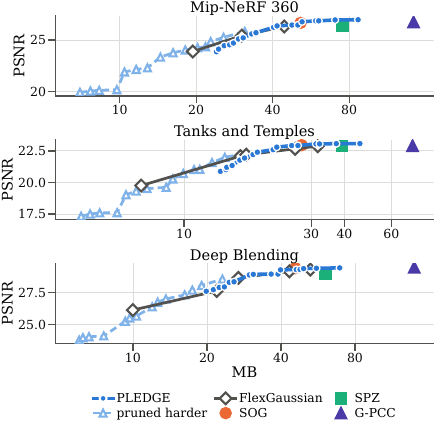}\\[1pt]
{\small (a) rate--distortion}
\end{minipage}\hfill
\begin{minipage}[t]{0.505\linewidth}\centering
\vspace{0pt}
\includegraphics[width=0.8\linewidth]{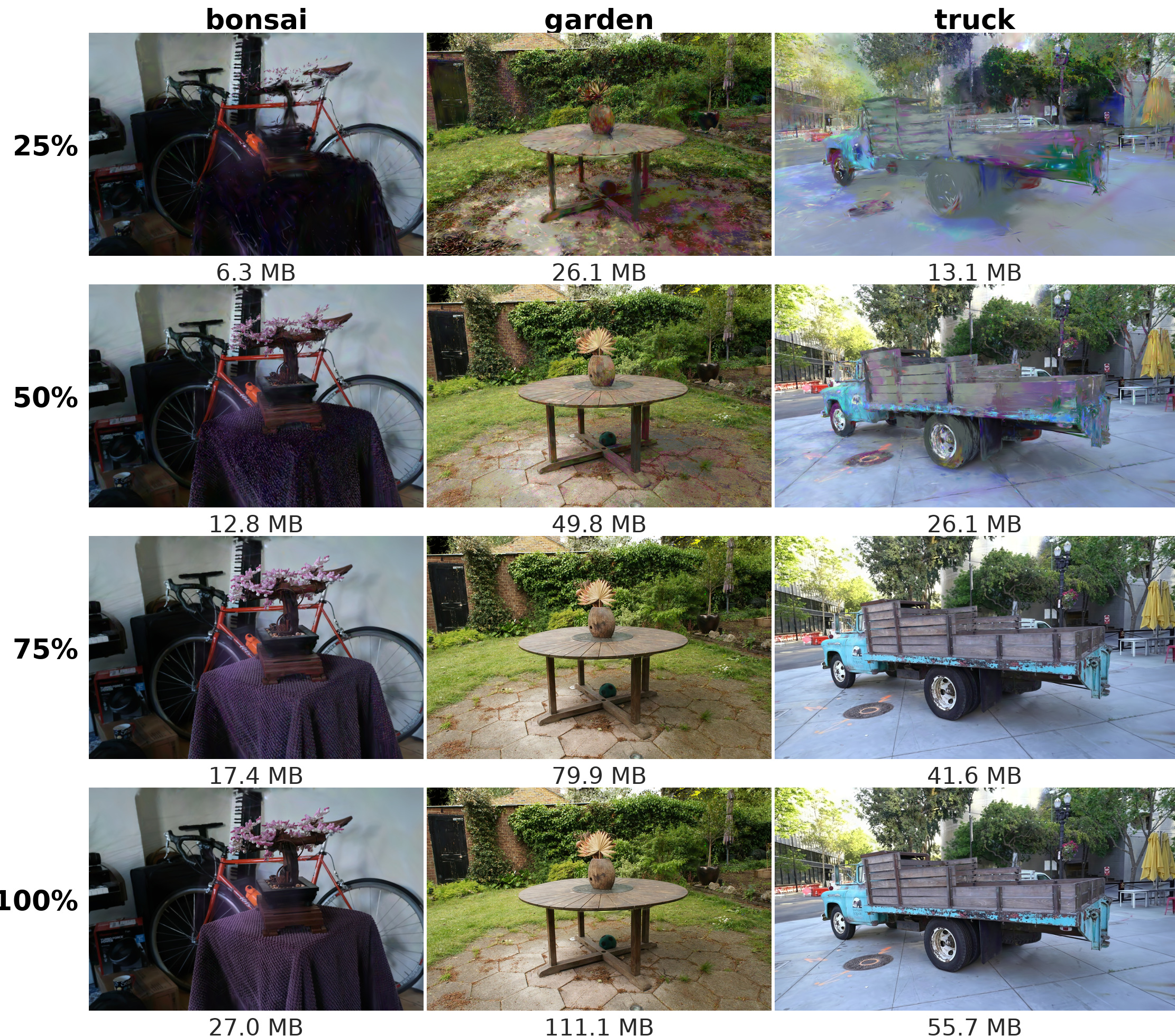}\\[1pt]
{\small (b) a partial download}
\end{minipage}
\caption{\label{fig:rd}\textbf{(a)} Rate--distortion. Solid: \(60\%\) keep; dashed: a front over keep
fractions \(45\%\) down to \(12\%\), whose bounds are relative to smaller
models. \textbf{(b)} Progressive decoding at \(25/50/75/100\%\) of the byte budget;
every valid prefix satisfies Proposition~\ref{prop:ladder}. Test-split PSNR
for the full sweep is reported in \S5.4.\label{fig:progvis}}
\end{figure}

\begin{table}[tb]
\centering
{\renewcommand{\baselinestretch}{1}\scriptsize\setlength{\tabcolsep}{2.4pt}
\begin{minipage}[t]{0.475\linewidth}\centering
\vspace{0pt}
\input{tables/guarantee}\\[2pt]
(a) the guarantee
\end{minipage}\hfill
\begin{minipage}[t]{0.505\linewidth}\centering
\vspace{0pt}
\input{tables/main}\\[2pt]
(b) thirteen-scene mean
\end{minipage}}
\caption{\label{tab:guarantee}\textbf{(a)} Worst error over thirteen scenes at nine
\((\taupx,\taush,\tauC)\) settings, each entry divided by its own bound.
Upper block by attribute; lower block by declared \(\tau\) in image space,
against \(\taupx\), \(2\taush\) and \(\sqrt2\taupx{+}2\taush\) --- the bound
\eqref{eq:composite} with the loose \(\delta_J\) term dropped. Footprint rows are Hausdorff distances
between the rasterizer's \(\Sigma'\) for the reference and the decoded model,
computed independently of the bounds in \S3.3.
\textbf{(b)} Thirteen-scene mean at three \ours\ operating points,
\(\tauC=0.8\), \(0.2\) and \(0.1\) for (A), (B) and (C), all at
\((\taupx,\taush,\taual)=(8,16,0.005)\); (C) matches SOG in size.\label{tab:main}}
\end{table}

\begin{figure}[tb]
\centering
\includegraphics[width=5in]{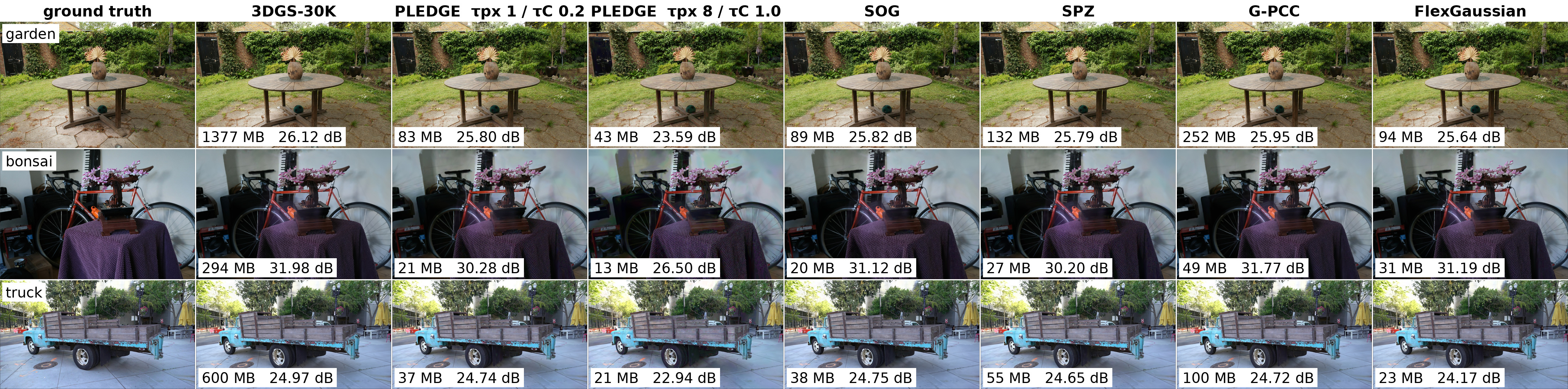}
\vspace{-2em}
\caption{\label{fig:qual}Representative held-out views, each codec at its own operating point.}
\end{figure}

We use the official INRIA 3DGS-30K models for nine Mip-NeRF~360
scenes~\cite{barron2022mipnerf360}, two Tanks and
Temples~\cite{knapitsch2017tanks} and two Deep
Blending~\cite{hedman2018deepblending}, under the survey
protocol~\cite{bagdasarian3dgszip}. All methods, FlexGaussian included, are
re-rendered with gsplat~\cite{ye2025gsplat} under one metric implementation.
Unless stated otherwise, \(\taual=0.005\) and \(z_{\min}=0.2\). Nothing is
trained: over the nine Mip-NeRF~360 scenes, bound inversion and the residual
loop together take \(0.4\)--\(1.2\) seconds per scene on one A100; model
loading and entropy coding take the rest of the wall clock.

\SubSection{5.2 Rate--distortion}
Table~\ref{tab:main}(b) and Figure~\ref{fig:rd}(a) compare rate--distortion
performance, and Figure~\ref{fig:qual} shows representative reconstructions.
Across the thirteen scenes, \ours\ achieves compression ratios of
\(15\)--\(30\times\). We sweep the four tolerances and report Pareto-optimal
configurations.

Over four scenes, relaxing \(\taush\) reduces file size by \(21\%\) with less
than \(0.05\,\mathrm{dB}\) PSNR loss, while relaxing \(\taupx\) reduces it by
\(13\%\) and \(\taual\) by only \(5\%\). Bit allocation moves with the
tolerances: degree 1--3 SH takes \(48\%\) of the file at \(\tauC=0.05\) but
\(11\%\) at \(\tauC=0.4\), where position grows from \(14\%\) to \(26\%\).

At matched quality, \ours\ produces smaller files at \(11\) of \(19\)
comparable training-free baseline operating points. At the lowest rates, the
fixed \(60\%\) keep fraction becomes the bottleneck:
\ours\ requires \(1.24\times\) FlexGaussian's size at its loosest point.
Reducing the keep fraction to \(12\%\) removes that deficit on Mip-NeRF~360
and extends our curve \(3.4\times\) further toward low rates, at the cost of
\(1.2\,\mathrm{dB}\) lower maximum PSNR.

\SubSection{5.3 Verifying the declared bounds}

We evaluate nine settings on all thirteen scenes. Six take
\(\taush=\taupx\): three at \(\taupx=1\) with \(\tauC=0.1\), \(0.2\) and
\(0.5\), then \((\taupx,\tauC)=(2,0.2)\), \((4,0.5)\) and \((8,1.0)\). The
other three are the operating points of Table~\ref{tab:guarantee}(b), at
\(\taupx=8\) and \(\taush=16\) with \(\tauC=0.1\), \(0.2\) and \(0.8\).
Throughout, \(\taual=0.005\). For each file, we
recover the declared tolerances from its header and compare them with the
decoded attribute errors. All
\(585\) parameter-space checks satisfy their bounds within numerical precision
(Table~\ref{tab:guarantee}(a), top row).

We independently validate the image-space bounds of
Propositions~\ref{prop:position} and~\ref{prop:footprint} by reconstructing the
rasterizer's projected covariances, \(\Sigma'=J\Sigma J^{\!\top}\), for both
the reference and decoded primitives, and measuring the Hausdorff distance
between their footprints through their support functions. Center displacement,
footprint deformation at fixed center, and their combination reach at most
\(0.77\,\taupx\), \(0.35\cdot2\taush\) and \(0.41\,(\sqrt2\taupx+2\taush)\),
respectively (Table~\ref{tab:guarantee}(a), lower block). At \(\taupx=1\) and
\(\tauC=0.1\), measured displacements range from \(0.072\) pixels on
\emph{garden} to \(0.582\) on \emph{train}.

The off-axis factor \(\sec\phi_J\) ranges from \(1.13\) (\emph{bonsai}) to
\(1.61\) (\emph{train}); omitting it violates the footprint bound on off-axis
primitives, and including it costs \(1.7\%\) of the rate on average.

\SubSection{5.4 Progressive decoding}

Figure~\ref{fig:progvis}(b) decodes byte-budgeted prefixes, the client given
nothing but the prefix. Over the nine Mip-NeRF~360 scenes test-split PSNR
rises monotonically on every scene at the ten budgets we sample, from
\(14.7\,\mathrm{dB}\) at \(10\%\) to \(26.9\) at the full stream; the last
primitive arrives at \(40\%\) on average, and at \(50\%\) of the file the mean
gap to the full decode is \(2.0\,\mathrm{dB}\), the worst \(3.9\). The sweep
opens the precision axis fully, giving the finest rungs the format allows,
while the figure uses \(G=3\).

\Section{6. Limitations and conclusion}
The guarantee is stated relative to the pruned model and within the declared
camera domain, covers the rasterized footprint rather than the exact
perspective silhouette, and bounds composited color only at fixed geometry and opacity.
Within this scope, \ours\
provides interpretable, per-primitive bounds across a camera domain and
throughout valid progressive decoding. Because the bounds are independent of
prediction and entropy coding, improved residual coders can be incorporated
without changing their derivation.

\Section{References}
\bibliographystyle{IEEEbib}
\bibliography{refs}

\end{document}

%% file: figs/framework.tex
\begin{tikzpicture}[
  x=1in, y=1in, font=\scriptsize,
  box/.style   = {draw=black!55, rounded corners=1.4pt, inner xsep=2pt,
                  inner ysep=2.5pt, align=center, anchor=west,
                  minimum height=0.36in},
  key/.style   = {box, draw=ourblue, line width=0.8pt},
  fl/.style    = {-{Latex[length=3.2pt,width=2.6pt]}, draw=black!55, line width=0.5pt},
  kfl/.style   = {-{Latex[length=3.4pt,width=2.8pt]}, draw=ourblue, line width=0.9pt},
  lbl/.style   = {font=\tiny, text=black!60, inner sep=0.8pt},
]
\node[key, text width=1.12in] (dec)    at (0.000,0)
  {\textbf{declared}\\[0.5pt]$\taupx\,\taush\,\tauC\,\taual$\\[0.5pt]
   domain $z_{\min},f_{\max},\phi$};
\node[key, text width=0.52in] (eps)    at (1.436,0) {per-prim.\\$\varepsilon_{i,j}$};
\node[key, text width=1.05in] (quant)  at (2.272,0)
  {\textbf{quantize}\\$k=\lceil r/\Delta\rfloor$\\[0.5pt]
   \raisebox{0pt}{\tiny $r$: prune, level, predict}};
\node[box, text width=0.56in] (planes) at (3.548,0) {bit-planes\\$+$ ANS};
\node[box, text width=0.54in] (file)   at (4.334,0) {\texttt{.gsp}\\header first};
\node[key, text width=0.84in] (prefix) at (5.100,0)
  {\textbf{any valid prefix}\\$(2^{\,B-p+1}\!\!-\!1)\varepsilon$};

\draw[kfl] (dec.east)    -- node[lbl,above]{invert}                (eps.west);
\draw[kfl] (eps.east)    -- node[lbl,above]{$\Delta{=}2\varepsilon$} (quant.west);
\draw[fl]  (quant.east)  -- (planes.west);
\draw[fl]  (planes.east) -- (file.west);
\draw[kfl] (file.east)   -- (prefix.west);
\end{tikzpicture}

%% file: tables/guarantee.tex
\begin{tabular}{@{}lrrrrr@{}}
\toprule
& $\mu$ & $s$ & $q$ & $\alpha$ & $C$ \\
\midrule
fraction of bound & 1.000 & 1.000 & 0.996 & 1.000 & 0.906 \\
\bottomrule
\end{tabular}
\\[5pt]
\begin{tabular}{@{}lrrrrr@{}}
\toprule
declared $\tau$ & 1 & 2 & 4 & 8 & 16 \\
\midrule
center & 0.672 & 0.765 & 0.626 & 0.681 & --- \\
footprint & 0.291 & 0.349 & 0.321 & 0.341 & 0.271 \\
\quad +center & 0.397 & 0.366 & 0.402 & 0.357 & 0.269 \\
\bottomrule
\end{tabular}

%% file: tables/main.tex
\begin{tabular}{@{}lrrrrr@{}}
\toprule
method & MB & ratio & PSNR & SSIM & LPIPS \\
\midrule
3DGS-30K (input) & 722.18 & 1.0$\times$ & 26.83 & 0.8252 & 0.2591 \\
\midrule
\ours\ (A) & 24.50 & 29.5$\times$ & 24.49 & 0.7694 & 0.3269 \\
\ours\ (B) & 37.00 & 19.5$\times$ & 26.21 & 0.8073 & 0.2852 \\
\ours\ (C) & 46.56 & 15.5$\times$ & 26.56 & 0.8153 & 0.2730 \\
\midrule
FlexGaussian & 37.55 & 19.2$\times$ & 26.04 & 0.7964 & 0.2910 \\
SPZ & 67.41 & 10.7$\times$ & 26.25 & 0.8126 & 0.2796 \\
SOG & 46.94 & 15.4$\times$ & 26.51 & 0.8156 & 0.2763 \\
G-PCC & 131.83 & 5.5$\times$ & 26.57 & 0.8134 & 0.2698 \\
\bottomrule
\end{tabular}

%% file: refs.bib
@article{bagdasarian3dgszip,
  author  = {Bagdasarian, Milena T. and Knoll, Paul and Li, Yi-Hsin and Barthel, Florian and Hilsmann, Anna and Eisert, Peter and Morgenstern, Wieland},
  title   = {{3DGS.zip}: A {Survey} on {3D} {Gaussian} {Splatting} {Compression} {Methods}},
  journal = {Computer Graphics Forum},
  pages   = {e70078},
  year    = {2025},
  doi     = {https://doi.org/10.1111/cgf.70078},
  url     = {https://onlinelibrary.wiley.com/doi/abs/10.1111/cgf.70078},
  note    = {\url{https://w-m.github.io/3dgs-compression-survey/}},
}

@inproceedings{barron2022mipnerf360,
  title     = {{Mip-NeRF 360: Unbounded Anti-Aliased Neural Radiance Fields}},
  author    = {Barron, Jonathan T. and Mildenhall, Ben and Verbin, Dor and Srinivasan, Pratul P. and Hedman, Peter},
  booktitle = {CVPR},
  pages     = {5460--5469},
  year      = {2022},
  doi       = {10.1109/CVPR52688.2022.00539}
}

@inproceedings{chenhac,
  author    = {Chen, Yihang and Wu, Qianyi and Lin, Weiyao and Harandi, Mehrtash and Cai, Jianfei},
  title     = {{HAC}: Hash-Grid Assisted Context for {3D} {G}aussian Splatting Compression},
  booktitle = {ECCV},
  series    = {Lecture Notes in Computer Science},
  pages     = {422--438},
  publisher = {Springer},
  year      = {2024},
  doi       = {10.1007/978-3-031-72667-5_24}
}

@inproceedings{di2016sz,
  author    = {Sheng Di and Franck Cappello},
  title     = {Fast Error-Bounded Lossy {HPC} Data Compression with {SZ}},
  booktitle = {IPDPS},
  pages     = {730--739},
  year      = {2016},
  doi       = {10.1109/IPDPS.2016.11}
}

@misc{duda2013ans,
  author       = {Jarek Duda},
  title        = {Asymmetric numeral systems: entropy coding combining speed of {H}uffman coding with compression rate of arithmetic coding},
  howpublished = {arXiv preprint arXiv:1311.2540 [cs.IT]},
  year         = {2013},
  eprint       = {1311.2540},
  archiveprefix = {arXiv},
  primaryclass = {cs.IT},
  doi          = {10.48550/arXiv.1311.2540}
}

@inproceedings{fanlightgaussian,
  author    = {Fan, Zhiwen and Wang, Kevin and Wen, Kairun and Zhu, Zehao and Xu, Dejia and Wang, Zhangyang},
  title     = {{LightGaussian}: Unbounded {3D} {G}aussian Compression with 15x Reduction and 200+ {FPS}},
  booktitle = {NeurIPS},
  volume    = {37},
  year      = {2024}
}

@article{hedman2018deepblending,
  title   = {{Deep Blending for Free-Viewpoint Image-Based Rendering}},
  author  = {Hedman, Peter and Philip, Julien and Price, True and Frahm, Jan-Michael and Drettakis, George and Brostow, Gabriel},
  journal = {ACM Trans. Graph.},
  volume  = {37},
  number  = {6},
  pages   = {257:1--257:15},
  year    = {2018},
  doi     = {10.1145/3272127.3275084}
}

@article{kerbl3dgs,
  title   = {{3D Gaussian Splatting for Real-Time Radiance Field Rendering}},
  author  = {Kerbl, Bernhard and Kopanas, Georgios and Leimk{\"u}hler, Thomas and Drettakis, George},
  journal = {ACM Trans. Graph.},
  volume  = {42},
  number  = {4},
  pages   = {139:1--139:14},
  year    = {2023},
  doi     = {10.1145/3592433}
}

@article{knapitsch2017tanks,
  title   = {{Tanks and Temples: Benchmarking Large-Scale Scene Reconstruction}},
  author  = {Knapitsch, Arno and Park, Jaesik and Zhou, Qian-Yi and Koltun, Vladlen},
  journal = {ACM Trans. Graph.},
  volume  = {36},
  number  = {4},
  pages   = {78:1--78:13},
  year    = {2017},
  doi     = {10.1145/3072959.3073599}
}

@inproceedings{luscaffoldgs,
  author    = {Lu, Tao and Yu, Mulin and Xu, Linning and Xiangli, Yuanbo and Wang, Limin and Lin, Dahua and Dai, Bo},
  title     = {{Scaffold-GS}: Structured {3D} {G}aussians for View-Adaptive Rendering},
  booktitle = {CVPR},
  pages     = {20654--20664},
  year      = {2024},
  doi       = {10.1109/CVPR52733.2024.01952}
}

@inproceedings{morgensternsog,
  author    = {Morgenstern, Wieland and Barthel, Florian and Hilsmann, Anna and Eisert, Peter},
  title     = {Compact {3D} Scene Representation via Self-Organizing {G}aussian Grids},
  booktitle = {ECCV},
  series    = {Lecture Notes in Computer Science},
  pages     = {18--34},
  publisher = {Springer},
  year      = {2024},
  doi       = {10.1007/978-3-031-73013-9_2}
}

@misc{mpegtmc13,
  author       = {{MPEG Group}},
  title        = {{mpeg-pcc-tmc13}: Geometry-Based Point Cloud Compression ({G-PCC}) Test Model},
  howpublished = {\url{https://github.com/MPEGGroup/mpeg-pcc-tmc13}},
  year         = {2023},
  note         = {Reference software for {ISO/IEC} 23090-9; builds the \texttt{tmc3} encoder/decoder}
}

@misc{nianticspz,
  author       = {{Niantic Labs}},
  title        = {{SPZ}: A File Format for Compressed {3D} {G}aussian Splats},
  howpublished = {\url{https://github.com/nianticlabs/spz}},
  year         = {2024},
  note         = {Open-source software (MIT license); no associated publication}
}

@inproceedings{niedermayrcompressed3dgs,
  author    = {Niedermayr, Simon and Stumpfegger, Josef and Westermann, R{\"u}diger},
  title     = {Compressed {3D} {G}aussian Splatting for Accelerated Novel View Synthesis},
  booktitle = {CVPR},
  pages     = {10349--10358},
  year      = {2024},
  doi       = {10.1109/CVPR52733.2024.00985}
}

@article{shapiro1993ezw,
  author  = {Jerome M. Shapiro},
  title   = {Embedded image coding using zerotrees of wavelet coefficients},
  journal = {IEEE Trans. Signal Process.},
  volume  = {41},
  number  = {12},
  pages   = {3445--3462},
  year    = {1993},
  doi     = {10.1109/78.258085}
}

@book{taubman2002jpeg2000,
  author    = {David S. Taubman and Michael W. Marcellin},
  title     = {{JPEG2000}: Image Compression Fundamentals, Standards and Practice},
  publisher = {Kluwer Academic Publishers},
  address   = {Boston, MA, USA},
  series    = {The Kluwer International Series in Engineering and Computer Science},
  volume    = {642},
  year      = {2002},
  doi       = {10.1007/978-1-4615-0799-4}
}

@inproceedings{tianflexgaussian,
  author    = {Tian, Boyuan and Gao, Qizhe and Xianyu, Siran and Cui, Xiaotong and Zhang, Minjia},
  title     = {{FlexGaussian}: Flexible and Cost-Effective Training-Free Compression for {3D} {G}aussian Splatting},
  booktitle = {ACM MM},
  pages     = {7287--7296},
  publisher = {ACM},
  year      = {2025},
  doi       = {10.1145/3746027.3754744}
}

@article{ye2025gsplat,
  title   = {{gsplat: An Open-Source Library for Gaussian Splatting}},
  author  = {Ye, Vickie and Li, Ruilong and Kerr, Justin and Turkulainen, Matias and Yi, Brent and Pan, Zhuoyang and Seiskari, Otto and Ye, Jianbo and Hu, Jeffrey and Tancik, Matthew and Kanazawa, Angjoo},
  journal = {J. Mach. Learn. Res.},
  volume  = {26},
  number  = {34},
  pages   = {1--17},
  year    = {2025},
  url     = {http://jmlr.org/papers/v26/24-1476.html}
}

@inproceedings{chen2026pcgs,
  author    = {Chen, Yihang and Li, Mengyao and Wu, Qianyi and Lin, Weiyao and Harandi, Mehrtash and Cai, Jianfei},
  title     = {{PCGS}: Progressive Compression of {3D} {G}aussian Splatting},
  booktitle = {AAAI},
  year      = {2026},
  note      = {arXiv:2503.08511}
}
